\newtheorem{theorem}{Theorem}[section]
\title{Circa: Stochastic ReLUs for Private Deep Learning}
\author{%
  Zahra Ghodsi$^1$, Nandan Kumar Jha$^2$, Brandon Reagen$^2$, Siddharth Garg$^2$ \\
  $^1$University of California San Diego, $^2$New York University \\
  \texttt{zghodsi@ucsd.edu}, \texttt{\{nj2049, bjr5, sg175\}@nyu.edu} \\
}
\newcommand{\sys}{Circa\xspace}
\begin{document}

\maketitle

\begin{abstract}
The simultaneous rise of machine learning as a service and concerns over user privacy have 
increasingly motivated the need for private inference (PI).
While recent work demonstrates PI is possible using cryptographic primitives, 
the computational overheads render it impractical.
The community is largely unprepared to address these overheads, as the source of slowdown in PI stems from the ReLU operator
whereas optimizations for plaintext inference focus on optimizing FLOPs.
In this paper we re-think the ReLU computation and propose optimizations for PI
tailored to properties of neural networks.
Specifically, we reformulate ReLU as an approximate sign test
and introduce a novel truncation method for the sign test that significantly
reduces the cost per ReLU.
These optimizations result in a specific type of stochastic ReLU.
The key observation is that the stochastic fault behavior 
is well suited for the fault-tolerant properties of neural network inference.
Thus, we provide significant savings without impacting accuracy.
We collectively call the optimizations Circa and demonstrate improvements of up to 
4.7$\times$ storage and 3$\times$ runtime over baseline implementations;
we further show that Circa can be used on top of recent PI optimizations to obtain 1.8$\times$ additional speedup.

\end{abstract}

\section{Introduction}\label{sec:intro}

Today, Machine Learning as a Service (MLaaS) provides high-quality user experiences
but comes at the cost of privacy---clients either share their personal data with the server 
or the server must disclose its model to the clients.
Ideally, both the client and server would preserve the privacy of their inputs and model
without sacrificing quality.
A recent and growing body of work has focused on designing and optimizing cryptographic protocols for
private inference (PI).
With PI, MLaaS computations are performed obliviously; without the server seeing the client's data
nor the client learning the server's model.
PI protocols are built using cryptographic primitives including homomorphic encryption (HE), Secret Sharing (SS),
and secure multiparty computation (MPC).
The challenge is that all known protocols for PI incur impractically high overheads, rendering them unusable.

Existing PI frameworks~\cite{liu2017oblivious, juvekar2018gazelle, mishra2020delphi} are based on \textit{hybrid} protocols,
where different cryptographic techniques are used to evaluate different network layers. 
Delphi~\cite{mishra2020delphi}, a leading solution based on Gazelle~\cite{juvekar2018gazelle}, 
uses additive secret sharing for convolution and fully-connected layers.
Secret sharing supports fast addition and multiplication 
by moving large parts of the computation to an offline phase~\cite{beaver1995precomputing}.
Thus, convolutions can be computed at near plaintext speed.
Non-linear functions, notably ReLU, cannot enjoy the same speedups.
Most protocols (including Delphi, Gazelle, and MiniONN~\cite{liu2017oblivious}) use Yao's Garbled Circuits (GC)~\cite{yao1986generate} to process ReLUs.
GCs allow two parties to collaboratively and privately compute arbitrary
Boolean functions. 
At a high-level, GCs represent functions as encrypted two-input truth tables.
This means that computing a function with GCs requires the function be decomposed into
a circuit of binary gates that processes inputs in a bit-wise fashion.
Thus, evaluating ReLUs privately is extremely expensive, to the point that
PI inference runtime is {dominated} by ReLUs~\cite{mishra2020delphi,cryptonas}.

Therefore, reducing ReLU cost is critical to realizing practical PI.
There are two general approaches for minimizing the cost of ReLUs:
designing new architectures that limit ReLU counts,
and optimizing the cost per ReLU.
Prior work has almost exclusively focused on minimizing ReLU counts.
Work along this line includes replacing or approximating ReLUs with quadratics or polynomials (e.g., CryptoNets~\cite{gilad2016cryptonets}, Delphi~\cite{mishra2020delphi}, SAFENet~\cite{SAFENET}),
designing new networks architectures to maximize accuracy per ReLU (e.g., CryptoNAS~\cite{cryptonas}),
and more aggressive techniques that simply remove ReLUs from the network (e.g., DeepReDuce~\cite{jha2021deepreduce}). 
Relatively little attention has been given to minimizing the cost of the ReLU operation itself.

In this paper we propose \sys\footnote{In Latin, ``circa" means approximately.}, 
a novel method to reduce ReLU cost based on a new \emph{stochastic ReLU} function.
First, we refactor the ReLU as a sign computation followed by a multiplication, 
allowing us to push the multiplication from
GC to SS, leaving only a sign computation in the GC. 
Next, we approximate the sign computation to further reduce GC cost. 
This approximation is not free; it results in stochastic sign evaluation where the results are sometimes incorrect  
(we call incorrect computations {faults} to differentiate from inference error/accuracy). 
Finally, we show that stochastic sign can be optimized even further by truncating its inputs; 
truncation introduces new faults, but only for small positive or negative values. 

Our key insight is that deep networks are highly resilient to stochastic ReLU fault behaviour,
which provides significant opportunity for runtime benefit. 
The stochastic ReLU introduces two types of faults.
First, the sign of a ReLU can be incorrectly computed with probability
proportional to the magnitude of the input
(this probability is the ratio of the input magnitude over field prime.)
In practice we find this rarely occurs as most ReLU inputs are very small (especially compared to the prime)
and thus the impact on accuracy is negligible.
Second, truncation can cause \emph{either} small positive {or} small negative values to fault.
\sys allows users to choose between these two probabilistic fault modes.
In \textit{NegPass}, small negative numbers experience a fault with some probability and are incorrectly passed through the ReLU.
Alternatively, \textit{PosZero} incorrectly resolves small positive inputs to zero.
Empirically, we find deep networks
to be highly resilient against such faults, tolerating more than 10\% fault rate without sacrificing accuracy.
Compared to Delphi, \sys-optimized networks
run up to 3$\times$ times faster.
We further show that \sys is orthogonal to the current best practice for ReLU count reduction~\cite{jha2021deepreduce}.
When combined, we observe an additional 1.8$\times$ speedup.

\section{Background}\label{sec:background}
\subsection{Private Inference}
We consider a client-server model where the client sends their input to the server
for inference using the server's model. 
The client and the server wish to keep both the input and model private during the inference computation. Our threat model is the same as prior work on private inference~\cite{liu2017oblivious, juvekar2018gazelle, mishra2020delphi}. More specifically, we operate in a two-party setting (i.e., client and server) where participants are honest-but-curious---they follow the protocol truthfully but may try to infer information about the other party's input/model during execution. 

We take the Delphi protocol~\cite{mishra2020delphi} as a baseline and implement our optimizations over it. Delphi uses HE and SS for linear layers, where computationally expensive HE operations are performed in an offline phase and online computations only require lightweight SS operations. For non-linear activations, Delphi uses ReLUs, evaluated using GCs, and polynomial activations ($x^2$), evaluated using Beaver multiplication triples~\cite{beaver1995precomputing}.
While \sys does not use polynomial activations, it uses Beaver triples in its stochastic ReLU implementation.
Next, we introduce the necessary cryptographic primitives and provide an overview of the Delphi protocol.

\subsection{Cryptographic Primitives}\label{sec:crypto}

\textbf{Finite Fields.} The cryptographic primitives described subsequently operate on values in a finite field of integer modulo a prime $p$, $\mathbb{F}_{p}$, i.e., the set $\{0,1,\ldots,p-1\}$. In practice, positive values will be represented with integers in range $[0,\frac{p-1}{2})$, and negative values will be integers in range $[\frac{p-1}{2},p)$.

\textbf{Additive Secret Sharing.} Additive secret shares of a value $x$ can be created for two parties by randomly sampling a value $r$ and setting shares as $\langle x\rangle_1 = r$ and $\langle x\rangle_2 = x-r$. The secret can be reconstructed by adding the shares $x=\langle x\rangle_1+\langle x\rangle_2$. Performing additions over two shared values is straightforward in this scheme, each party simply adds their respective shares to obtain an additive sharing of the result.

\textbf{Beaver Multiplication Triples.} This protocol~\cite{beaver1995precomputing} is used to perform multiplications over two secret shared values. A set of multiplication triples are generated offline from random values $a$ and $b$, such that the first party receives $\langle a\rangle_1$, $\langle b\rangle_1$, $\langle ab\rangle_1$, and the second party receives $\langle a\rangle_2$, $\langle b\rangle_2$, $\langle ab\rangle_2$. 
In the online phase $x$ and $y$ are secret shared among parties such that the first party holds $\langle x\rangle_1$, $\langle y\rangle_1$ and the second party holds $\langle x\rangle_2$, $\langle y\rangle_2$. To perform multiplication they consume a set of triples generated offline and at the end of the protocol the first party obtains $\langle xy\rangle_1$ and the second party obtains $\langle xy\rangle_2$.

\textbf{Homomorphic Encryption.} HE~\cite{elgamal1985public} is a type of encryption that enables computation directly on encrypted data. Assuming a public key $k_{pub}$ and corresponding secret key $k_{sec}$, an encryption function operates on a plaintext message to create a ciphertext $c=E(m, k_{pub})$, and a decryption function obtains the message from the ciphertext $m=D(c, k_{sec})$. An operation $\odot$ is homomorphic if for messages $m_1$, $m_2$ we have a function $\star$ such that decrypting the ciphertext $E(m_1, k_{pub}) \star E(m_2, k_{pub})$, which we also write as 
$\textrm{HE}(m_1 \odot m_2 )$, gives $m_1 \odot m_2$.

\textbf{Garbled Circuits.} GCs~\cite{yao1986generate} enable two parties to collaboratively compute a {Boolean} function on their private inputs. The function is first represented as a Boolean circuit $C$. One party (the \emph{garbler}) encodes the circuit using procedure $\tilde{C} \leftarrow Garble(C)$ and sends it to the second party (the \emph{evaluator}).
The evaluator also obtains labels of the inputs and is able to evaluate the circuit using procedure $Eval(\tilde{C})$ without learning intermediate values. Finally, evaluator will share the output labels with the garbler, and both parties obtain the output in plaintext. The cost of GC is largely dependent on the size of the Boolean circuit being computed. 


\begin{figure}
\centering
\includegraphics[width=\textwidth]{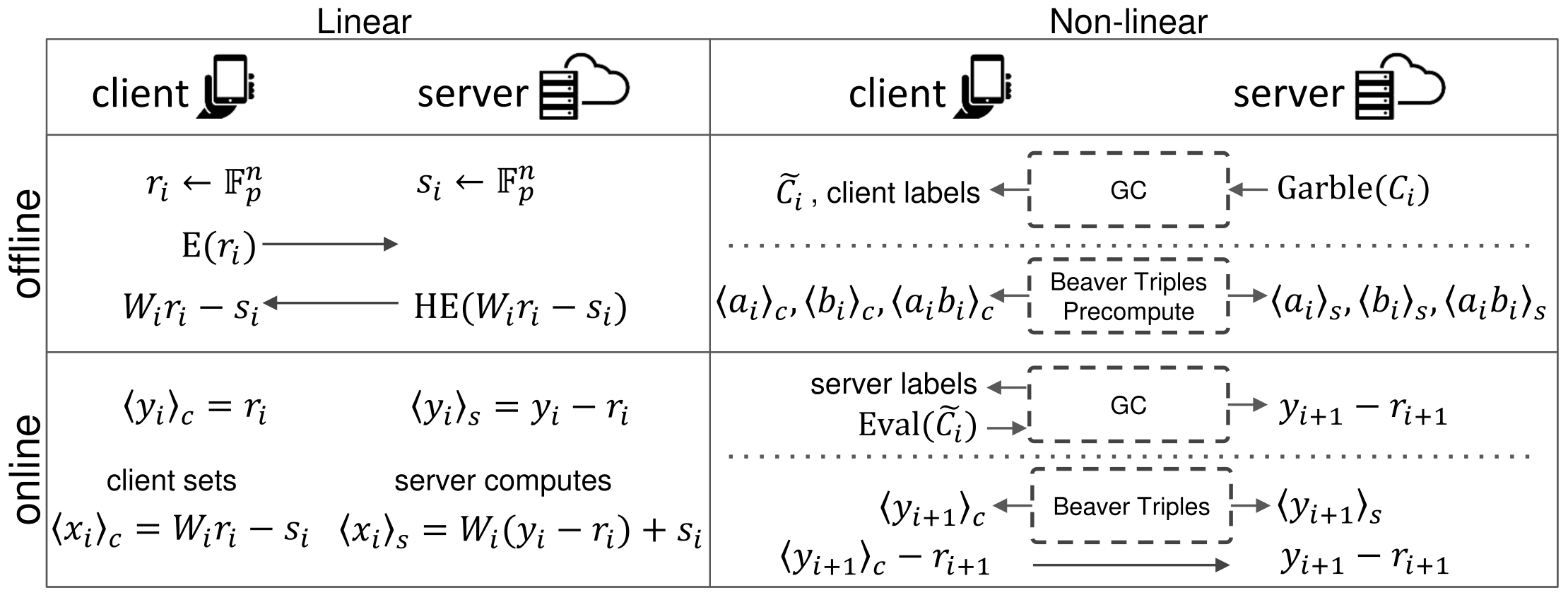}
\vspace{-1.5em}
\caption{An illustration of the Delphi protocol, which \sys uses.
Delphi is a hybid protocol that uses HE (offline) and SS (online) for linear layers, and GC and Beaver multiplication triples for ReLU and polynomial non-linear layers respectively. 
}
\vspace{-1em}
\label{fig:delphi}
\end{figure}

\subsection{Delphi Protocol}
We now briefly describe the linear and non-linear components of the Delphi protocol, depicted also in Figure~\ref{fig:delphi}. For simplicity, we will focus on the $i^{th}$ layer, with input $\mathbf{y_i}$ and output $\mathbf{y_{i+1}}$. The layer performs linear computation
$\mathbf{x_{i}} = \mathbf{W_i}.\mathbf{y_i}$ (here $\mathbf{W_i}$ are the server's weights) followed by a non-linear activation $\mathbf{y_{i+1}} = \textrm{ReLU}(\mathbf{x_{i}})$.
The protocol consists of an input independent \emph{offline phase}, and an input dependent \emph{online phase}.
As a starting point, the client samples random vectors 
$\mathbf{r_i}\in \mathbb{F}^n_p$ in the offline phase. For input $\mathbf{y_1}$, the client computes $\mathbf{y_1}-\mathbf{r_1}$ and sends it to the server.
For the $i^{th}$ layer, the client and the server start with secret shares of the layer input, $\langle \mathbf{y_i} \rangle_{c} = \mathbf{r_i}$ and $\langle \mathbf{y_i} \rangle_{s} = \mathbf{y_i}-\mathbf{r_i}$, and use the Delphi protocol to obtain shares of the output, $\langle \mathbf{y_{i+1}} \rangle_{c} = \mathbf{r_{i+1}}$ and $\langle \mathbf{y_{i+1}} \rangle_{s} = \mathbf{y_{i+1}}-\mathbf{r_{i+1}}$.

\textbf{Linear computation:} In the offline phase, the server samples random vectors 
$\mathbf{s_i}\in \mathbb{F}^n_p$.
Using HE on the server side, as shown in Figure~\ref{fig:delphi}, the client then obtains $\mathbf{W_i}.\mathbf{r_i}-\mathbf{s_i}$ without learning the server's randomness or weights and without the server learning the client's randomness. 
In the \emph{online} phase, the server computes $\mathbf{W_i}.(\mathbf{y_i}-\mathbf{r_i})+\mathbf{s_i}$, at which point the client and the server hold additive secret shares of $\mathbf{x_i} = \mathbf{W_i}.\mathbf{y_i}$. 
\sys uses the same protocol for linear layers as Delphi.

\textbf{Non-linear computation:} 
In this description we will focus on ReLU computations (for completeness, Figure~\ref{fig:delphi} also illustrates how quadratic activations are computed). 
During the offline phase, the server creates a Boolean circuit $C$ for each ReLU in the network,  
garbles the circuit and sends it to the client along with labels corresponding to the client's input. 
In the online phase, the linear layer protocol produces the server's share of the ReLUs input. The server sends labels corresponding to its share to the client. The client then evaluates the GC, which outputs the server's share, $\mathbf{y_{i+1}}-\mathbf{r_{i+1}}$, for the next linear layer. Online GC evaluation is the most expensive component of Delphi's online PI latency. \sys focuses on reducing this cost.

\section{\sys Methodology}\label{sec:method}
We now describe \sys, beginning with a cost analysis of the GC design used in prior work.
We then describe three optimizations to reduce GC size and latency that form the core of \sys. 

\subsection{Cost Analysis of ReLU GC}
The inputs to a conventional ReLU GC are the client's and server's shares of $x$, i.e., $\langle x \rangle_{c}$ and $\langle x \rangle_{s}$, and random value $r$ from the client. Each is a value in the field $\mathbb{F}_{p}$, implemented using $ m = \lceil log(p) \rceil$ bits. 
Prior work~\cite{liu2017oblivious, juvekar2018gazelle, mishra2020delphi} implements ReLU with a circuit that performs several computations contributing to the GC cost as shown in Figure~\ref{fig:relu}(a). First, $x  = \langle x \rangle_{c}$ + $\langle x \rangle_{s} \bmod p$ is computed by obtaining $z= \langle x \rangle_{c}$ + $ \langle x \rangle_{s}$ and $z-p$ using two m-bit adder/subtractor modules (ADD/SUB). $z$ is checked for overflow, and either $z$ or $z-p$ is selected using a multiplexer (MUX). Then, $x$ is compared with $\frac{p}{2}$ using an m-bit comparator (>), and a MUX outputs either $0$ or $x$. Finally, the GC outputs the server's share of $\textrm{ReLU}(x)$ by performing a modulo subtraction of $r$ from the output of the previous MUX using two ADD/SUB modules and another MUX.

This design, used by Delphi and Gazelle, results in a GC size of $17.2$KB per ReLU.
Overall, the GCs for ResNet32, as implemented in Delphi, require close to $5$GB of client-side storage per inference\footnote{Note that a separate GC must be constructed for each ReLU in a network and GCs cannot be reused across inferences.}. GC size directly correlates with PI latency, resulting in prohibitive online runtime.

\subsection{\sys's Stochastic ReLU}\label{sec:stochrelu}
\sys's Stochastic ReLU is built using three optimizations that work together to reduce GC size. We describe each optimization below. 

\begin{figure}
\centering
\subfloat[]{\includegraphics[width=.33\textwidth]{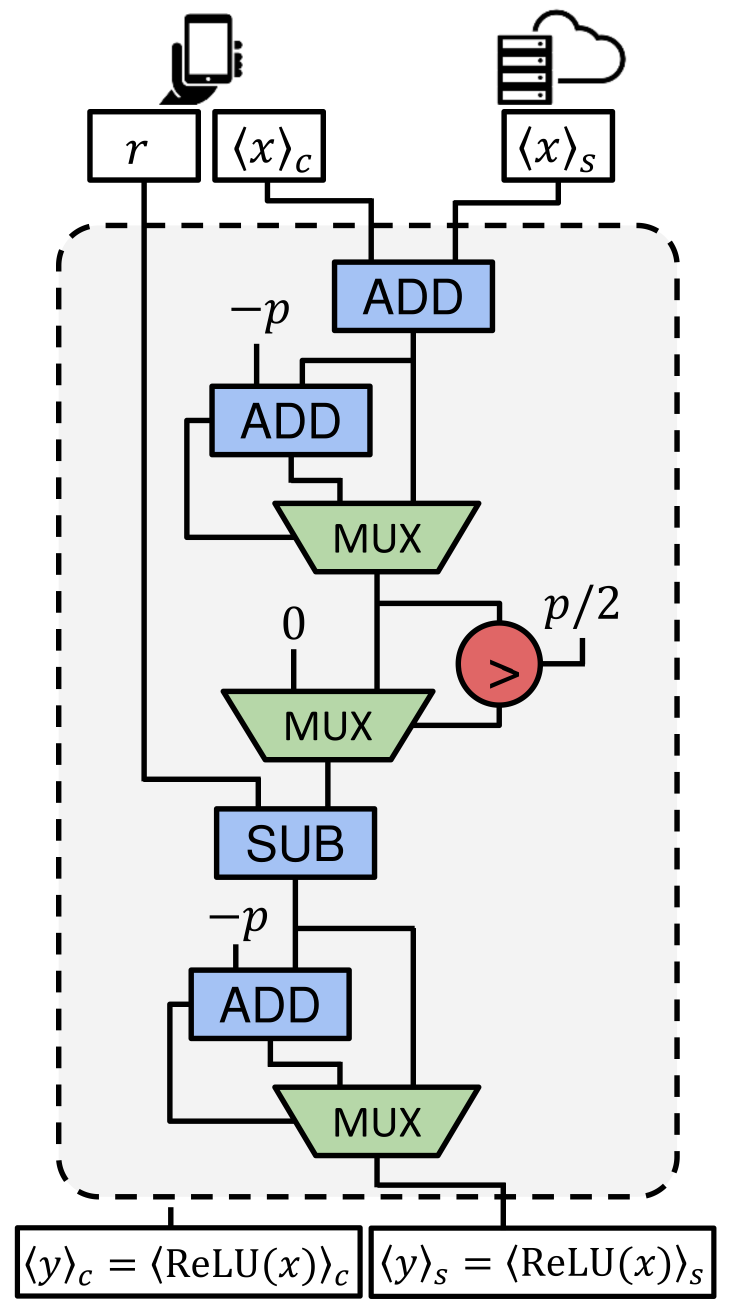}}\hfill
\subfloat[]{\includegraphics[width=.33\textwidth]{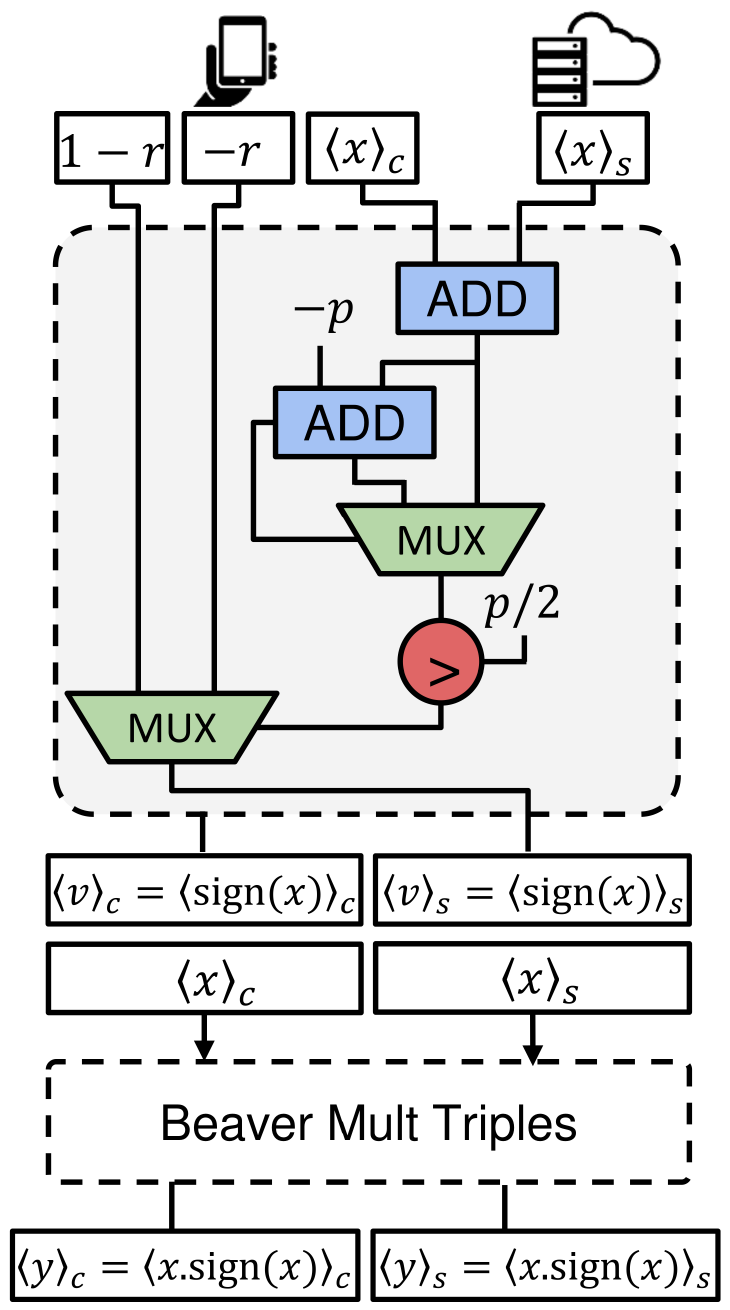}}\hfill
\subfloat[]{\includegraphics[width=.33\textwidth]{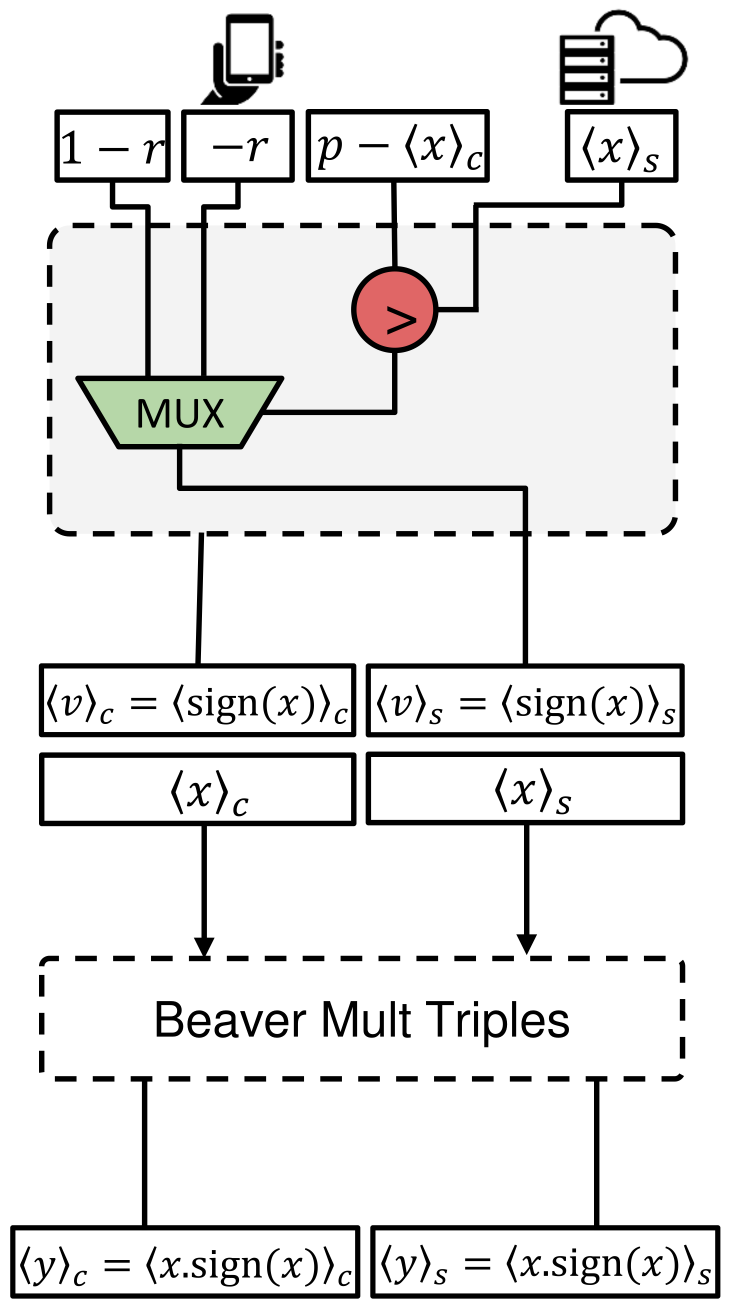}}
\vspace{-.5em}
\caption{Comparing implementations of the ReLU function for PI. (a) depicts the implementations in prior work~\cite{liu2017oblivious, juvekar2018gazelle, mishra2020delphi} where the ReLU function is implemented with GC, (b) shows a naive implementation of the sign function followed by multiplication triples, and (c) describes the implementation in \sys with an optimized sign function followed by multiplication triples. The heaviest part of the computation in each of these implementations relates to the GC which is shown in shaded blocks.}
\vspace{-1em}
\label{fig:relu}
\end{figure}

\textbf{Refactoring ReLUs.}
Our first observation is that $\textrm{ReLU}(x)$ can be refactored as $x.\textrm{sign}(x)$, 
where $\textrm{sign}(x)$ equals $1$ if $x\geq0$ and $0$ otherwise. Since multiplications can be evaluated cheaply 
online using Beaver triples, only $\textrm{sign}(x)$ must be implemented in GC. 
Let $v = \textrm{sign}(x)$; the GC computes the server's share of $v$, $\langle v \rangle_{s} = \textrm{sign}(x) - r$, using shares of $x$ and random value $r$ provided by the client. The client will then set its share to $\langle v \rangle_{c} = r$.

Figure~\ref{fig:relu}(b) shows our naive GC implementation for $\textrm{sign}(x)$. As in the $\textrm{ReLU}(x)$ GC (Figure~\ref{fig:relu}(a)), we first compute $x  = \langle x \rangle_{c}$ + $\langle x \rangle_{s} \bmod p$ using two ADD/SUB modules and a MUX, and use a comparator to check $x$ against $\frac{p}{2}$. By having the client pre-compute and provide both $-r$ and $1-r$ as inputs to the GC we save two ADD/SUB modules, since we no longer need to perform these computations inside the GC. Note that the client selects $r$ and can compute $-r$ and $1-r$ by itself at plaintext speed. 
Formally, our GC for the sign function implements:

    \begin{equation}
       sign \big(\langle x \rangle_{c}, \langle x \rangle_{s}, -r, 1-r \big)  = 
        \begin{cases}
            -r & \text{if $\langle x \rangle_{c} + \langle x \rangle_{s} \bmod p  > \frac{p}{2}$ } \\
            1-r   & \text{otherwise}
        \end{cases}
    \end{equation}

The client and server now hold secret shares of both $x$ (i.e., $\langle x \rangle_{c}$ and $\langle x \rangle_{s}$), and $v$ (i.e., $\langle v \rangle_{c}$ and $\langle v \rangle_{s}$). We now use pre-computed Beaver multiplication triples, as described in Section~\ref{sec:crypto}, to compute shares of $y=x.\textrm{sign}(x)$. 
This multiplication is cheap 
and the optimized ReLU is smaller and faster than standard GC implementation in Figure~\ref{fig:relu}(a), providing modest benefits.

\textbf{Stochastic Sign.}
Our second optimization further reduces the cost of the sign computation.
As noted previously, the naive sign GC still uses high-cost components inside the GC because of the need to perform \emph{modulo} additions to exactly reconstruct $x$; modulo additions require expensive checks for overflow and a subsequent subtraction.  
Our next optimization only looks at the regime without overflow, greatly simplifying the GC at the cost of introducing occasional faults in sign computation.   

Figure~\ref{fig:relu}(c) shows our proposed stochastic sign optimization that reduces the logic inside the GC to only a comparator and a MUX. 
We first formally define the stochastic sign function.
    \begin{equation}\label{eq:stochrelu}
       \widetilde{sign} \big (p-\langle x \rangle_{c}, \langle x \rangle_{s}, -r, 1-r \big )  = 
        \begin{cases}
            -r & \text{if $\langle x \rangle_{s}  \leq p-\langle x \rangle_{c}$ } \\
            1-r   & \text{otherwise}
        \end{cases}
    \end{equation}
Note that in the stochastic sign GC, the client sends the negated value of its share (or $p-\langle x \rangle_{c}$) instead of the share directly. This optimization avoids the need to compute $p-\langle x \rangle_{c}$ inside the GC itself. 
We formalize the fault rates of the stochastic ReLU below.
    
\begin{theorem}\label{th:stochrelu}
For any $x \in \mathbb{F}_{p}$, assuming shares $\langle x \rangle_s=x+t \bmod{p}$ and $\langle x \rangle_c=p-t$ where $t$ is picked uniformly at random from $\mathbb{F}_{p}$, $$P \Big\{ \widetilde{sign} \big (\langle x \rangle_{s}, p-\langle x \rangle_{c}, -r, 1-r \big ) \neq sign \big(\langle x \rangle_{c}, \langle x \rangle_{s}, -r, 1-r \big) \Big\}  = \frac{|x|}{p}.$$
\end{theorem}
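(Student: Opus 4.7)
The plan is to reduce both sign computations to simple inequalities on $t$ alone, then do a case split on whether the signed value of $x$ is positive or negative.

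First I would simplify. Since $\langle x \rangle_c = p - t$, the quantity $p - \langle x \rangle_c$ that the client sends equals $t$, so the stochastic condition in Equation~\eqref{eq:stochrelu} reduces to $\langle x \rangle_s \leq t$. For the true sign in Equation~(1), I would verify that $\langle x \rangle_c + \langle x \rangle_s \bmod p = x$ (a quick check in both the wrap and no-wrap subcases of $\langle x \rangle_s = x + t \bmod p$), so the true sign compares $x$ against $p/2$, with $x > p/2$ representing a negative value (output $-r$) and $x \leq p/2$ representing a nonnegative value (output $1-r$).

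Next, I would split on the two possible values of $\langle x \rangle_s$. If $x + t < p$, then $\langle x \rangle_s = x + t$, and the stochastic condition $x + t \leq t$ is equivalent to $x \leq 0$; otherwise ($x + t \geq p$) we have $\langle x \rangle_s = x + t - p \leq t$ always (since $x < p$), so the stochastic output is $-r$. Combining these, the stochastic sign outputs $-r$ exactly when either $x$ is in the ``negative'' half and $t < p - x$ (no wrap), or when $t \geq p - x$ (wrap). I would tabulate this against the true sign.

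Now the case analysis: if $x$ represents a nonnegative value ($x \in [0, (p-1)/2)$), the true sign outputs $1-r$, while the stochastic sign mistakenly outputs $-r$ exactly when $x + t \geq p$, i.e., when $t \in \{p-x, \ldots, p-1\}$, an event of probability $x/p = |x|/p$. Conversely, if $x$ represents a negative value ($x \in [(p-1)/2, p)$), the true sign outputs $-r$, while the stochastic sign mistakenly outputs $1-r$ exactly when $x + t < p$, i.e., $t \in \{0, \ldots, p - x - 1\}$, an event of probability $(p-x)/p = |x|/p$ (since $|x| = p - x$ in the signed interpretation). The edge case $x=0$ yields zero fault probability, matching $|x|/p = 0$.

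The main obstacle is purely bookkeeping: keeping straight the two representations of $x$ (as an element of $\mathbb{F}_p$ versus as a signed integer), correctly handling the modular wrap in $\langle x \rangle_s = x + t \bmod p$, and making sure the conditions in Equations (1) and~\eqref{eq:stochrelu} are interpreted consistently. Once the case split is laid out, the probability computation is a direct count of favorable values of $t$ under its uniform distribution on $\mathbb{F}_p$.
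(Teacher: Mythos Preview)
Your proposal is correct and follows essentially the same approach as the paper: reduce the stochastic test to $\langle x\rangle_s \le t$, then case-split on whether $x$ is in the positive or negative half of the field and count the values of $t$ that induce a fault. Your treatment is more explicit about the wrap/no-wrap subcases of $x+t\bmod p$, but the argument and the resulting counts are the same as in the paper.
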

\begin{proof}
Consider the case where $x$ is positive, i.e., $x\leq\frac{p}{2}$. The wrong sign is assigned to $x$ if $\langle x \rangle_s \leq p-\langle x \rangle_c$, which can be rewritten as 
$x+t \bmod{p} \leq t$. This is true when adding $x$ and $t$ incurs an overflow, with $t\geq p-x$. Since $t$ is drawn at random, the probability of error $P=\frac{x}{p}$.
A similar analysis for negative values of $x$ shows that a wrong sign is assigned when $x+t < p$ which results an error probability of $P=\frac{|x|}{p}$, where $|x|=p-x$ for $x>\frac{p}{2}$.
\end{proof}

\textbf{Truncated Stochastic Sign.}
Our third and most effective optimization builds on the observation that the $\langle x \rangle_s \leq p - \langle x \rangle_c $ (equivalently, $\langle x \rangle_s \leq t$) 
check in the stochastic sign GC can be performed on {truncated} values. 
We show that truncation introduces an additional fault mode; 
in particular, the check is incorrect with some probability for small positive values of $x$ in the range $[0, 2^{k})$ (i.e., values that truncate to $0$ with $k$-bit truncation) but is correct for all other values of $x$. 

Let $\lfloor x \rfloor_k$ represent truncation of the $k$ least significant bits of $x$, i.e., only the $m-k$ most significant bits of $x$ are retained. We define a truncated stochastic sign as:
\begin{equation}\label{eq:truncstochrelu}
\widetilde{sign}_{k} \big (p-\langle x \rangle_{c},  \langle x \rangle_{s}, -r, 1-r \big ) = \widetilde{sign} \big (\lfloor p-\langle x \rangle_{c} \rfloor_{k}, \lfloor \langle x \rangle_{s} \rfloor_{k}, -r, 1-r \big )
\end{equation}
where $k$ represents the amount of truncation. 
We now prove that the truncated stochastic sign function incurs additional errors (over the stochastic sign)
only for small positive values. 

\begin{theorem}\label{th:truncstochrelu}
For any $x \in \mathbb{F}_{p}$, assuming shares $\langle x \rangle_s=x+t \bmod{p}$ and $\langle x \rangle_c=p-t$ where $t$ is picked uniformly at random from $\mathbb{F}_{p}$, and assuming $\widetilde{sign} \big (\langle x \rangle_{c}, p-\langle x \rangle_{s}, -r, 1-r \big ) = sign \big(\langle x \rangle_{c}, \langle x \rangle_{s}, -r, 1-r \big)$, then:
$$P \Big\{ \widetilde{sign}_{k} \big (\langle x \rangle_{c}, p-\langle x \rangle_{s}, -r, 1-r \big ) \neq \widetilde{sign} \big (\langle x \rangle_{c}, p-\langle x \rangle_{s}, -r, 1-r \big ) \} = \frac{2^k-|x|}{2^k} \quad \forall x \in [0, 2^{k}),$$
and zero otherwise. 
\end{theorem}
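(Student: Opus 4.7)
The plan is to reduce the claim to a case analysis on the sign of $x$, using the characterization of $\widetilde{sign} = sign$ supplied by the proof of Theorem~\ref{th:stochrelu}. Under that event, a positive $x$ forces no overflow so that $\langle x \rangle_s = x + t$, while a negative $x$ (i.e.\ $x > p/2$) forces overflow so that $\langle x \rangle_s = t - |x|$. In both cases the untruncated comparison $\langle x \rangle_s \leq t$ returns the correct sign bit, so it suffices to ask, in each regime, when truncating both operands to their top $m - k$ bits flips the outcome of the comparison.

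First I would dispatch the negative case: $\langle x \rangle_s = t - |x| \leq t$, and since $y \mapsto \lfloor y \rfloor_k$ is monotone non-decreasing, $\lfloor t - |x| \rfloor_k \leq \lfloor t \rfloor_k$ still holds. The truncated check remains true, the sign stays at $0$, and no new fault is introduced; this gives the ``zero otherwise'' portion of the theorem for all $x > p/2$.

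For the positive case I would decompose $t = \lfloor t \rfloor_k \cdot 2^k + t_{\mathrm{lo}}$ with $t_{\mathrm{lo}} \in [0, 2^k)$, so that $x + t = \lfloor t \rfloor_k \cdot 2^k + (x + t_{\mathrm{lo}})$. A truncation-induced fault occurs exactly when $\lfloor x + t \rfloor_k = \lfloor t \rfloor_k$, which is equivalent to the sum having no carry out of the low $k$ bits, i.e.\ $x + t_{\mathrm{lo}} < 2^k$. This is impossible when $x \geq 2^k$, completing the other half of the ``zero otherwise'' claim; for $x \in [0, 2^k)$ it is equivalent to $t_{\mathrm{lo}} \in [0, 2^k - x)$, giving $2^k - x = 2^k - |x|$ favorable low-bit patterns out of $2^k$.

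The final step is the probability accounting, which I expect to be the only real (and minor) technical subtlety. Given the conditioning $t \in [0, p - x)$ inherited from the Theorem~\ref{th:stochrelu} event, the induced distribution of $t_{\mathrm{lo}}$ on $[0, 2^k)$ is exactly uniform precisely when $p - x$ is a multiple of $2^k$; otherwise there is an $O(2^k / p)$ discrepancy coming from the last partial block in $[0, p-x)$. In the practical regime $2^k \ll p$ (the Delphi prime is hundreds of bits wide while $k$ is at most a few tens), this discrepancy is negligible and the clean ratio $(2^k - |x|)/2^k$ stated in the theorem is recovered, closing the argument.
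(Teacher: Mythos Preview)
Your argument follows the same case split and carry-out reasoning as the paper's proof, and is in fact more careful: you justify the negative case via monotonicity of $\lfloor\cdot\rfloor_k$ and explicitly flag the $O(2^k/p)$ non-uniformity of $t_{\mathrm{lo}}$ under the conditioning, both of which the paper glosses over. One minor slip: the paper's prime is 31 bits, not hundreds, so the discrepancy is on the order of $2^{-12}$ rather than cryptographically tiny---but this is still small and matches the paper's own ``assuming a uniform distribution'' hand-wave.
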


\begin{proof}
For negative $x$, the stochastic sign is error-free if $\langle x\rangle_s \leq p-\langle x\rangle_c$ (equivalently $\langle x\rangle_s \leq t$). Consequently the truncated stochastic sign would not incur an error, since  $\lfloor\langle x\rangle_s \rfloor_k = \lfloor t \rfloor_k$ is assigned a negative sign.
The additional error in this case is when $\lfloor\langle x\rangle_s \rfloor_k = \lfloor t \rfloor_k$ for positive values of $x$.
Therefore the error happens when $\lfloor x+t \rfloor_k = \lfloor t \rfloor_k$, or $x+t$ does not overflow to higher $p-k$ bits. So for $|x|>2^k$ there is no error.  In other case, 
the error happens when $x+t>2^k$ or equivalently $t>2^k-|x|$.
Assuming a uniform distribution the error probability would be $P=\frac{2^k-|x|}{2^k}$. 
\end{proof}

\textbf{Putting it All Together: the Stochastic ReLU.}
We define \sys's stochastic ReLU as $\widetilde{\textrm{ReLU}}_{k}(x)=x.\widetilde{sign}_{k}(x)$ with $\widetilde{sign}_{k}$ defined in Eq.~\ref{eq:truncstochrelu}. Stochastic ReLUs incur \emph{two} types of faults:
(1) a sign error, independent of $k$, with probability $\frac{|x|}{p}$ (in practice $|x|\ll p$ for typical choices of prime), and (2) \emph{small} positive values in the truncation range $x \in [0, 2^{k})$ are zeroed out with high probability $\frac{2^k-|x|}{2^k}$; however, for small values we expect the impact on network accuracy to be low.

We note that Eq.~\ref{eq:stochrelu} could have been defined such that $\widetilde{sign}$ outputs $-r$ for $\langle x \rangle_{s} < p-\langle x \rangle_{c}$. With this modification, truncation errors occur with the same probability but for small negative values in range $[p-2^k, p)$ 
that are passed through to the ReLU output. That is, our stochastic ReLU can operate in one of two modes: (1) \textbf{PosZero}, that zeros out small positive values, or (2) \textbf{NegPass}, that passes through small negative values.

\section{Evaluation}\label{sec:eval}
In this section we evaluate \sys and validate our error model. We show that our optimizations have minimal effect on network accuracy and show the runtime and storage benefits of \sys.

\subsection{Experimental Setup}\label{sec:setup}
We perform experiments on ResNet18~\cite{he2016deep}, ResNet32~\cite{he2016deep} and VGG16~\cite{simonyan2014very}. 
Since \sys can be used to replace ReLU activations in \emph{any} network, we also perform experiments on DeepReDuce-optimized models~\cite{jha2021deepreduce} that are the current state of the art for fast PI. 
We train these networks on CIFAR-10/100~\cite{krizhevsky2010cifar} and TinyImageNet~\cite{yao2015tiny} datasets. 
CIFAR-10/100 (C10/100) datatset has 50k training and 10k test images (size $32\times32$) separated into 10/100 output classes. TinyImageNet (Tiny) consist of 200 output classes with 500 training and 50 test samples (size $64\times64$) per class.
Prior work on PI typically evaluate on smaller datasets and lower resolution images because PI is  prohibitively expensive for ImageNet or higher resolution inputs.

\begin{figure}
\centering
\subfloat[]{\includegraphics[width=.4\textwidth]{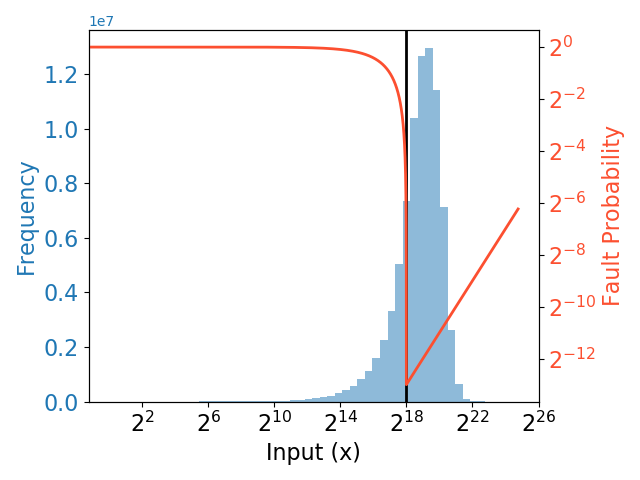}}\hfill
\subfloat[]{\includegraphics[width=.55\textwidth]{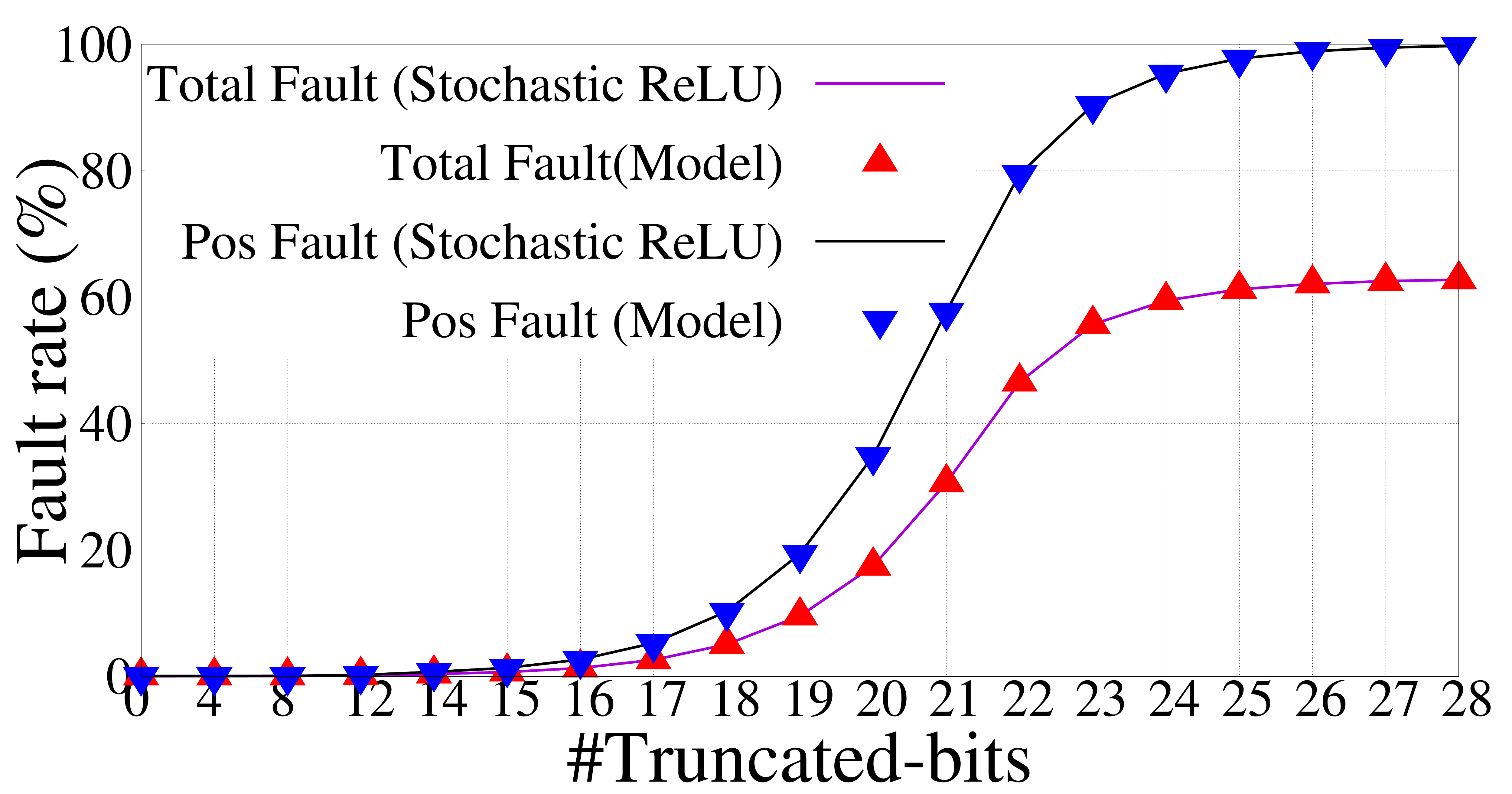}}
\vspace{-0.1in}
\caption{(a) Fault probability of stochastic ReLU with 18-bit truncation in the PosZero mode, and a histogram of ResNet18's activations, (b) Validating \sys's fault model for ResNet18 on CIFAR-100.}
\vspace{-2em}
\label{fig:FaultModelValidation}
\end{figure}

\sys uses the Delphi protocol as a base, but substantially modifies the way ReLUs are implemented. We use the SEAL library~\cite{dowlin2015manual} for HE, and fancy-garbling library~\cite{fancygarble} for GC.
We benchmark PI runtime on an Intel i9-10900X CPU running at 3.70GHz with 64GB of memory. 
The baseline accuracy of the models in PI is reported using an integer model with network values in a prime field. 
To obtain an integer model, we scale and quantize model parameters and input to 15 bits (as in Delphi), and pick a 31 bit prime field ($p=2138816513$) to ensure that multiplication of two 15-bit values does not exceed the field. 
The baseline accuracy of our integer models is reported in Table~\ref{tab:CircaC10C100Tiny}.


\begin{figure}
\centering
\subfloat[CIFAR-100 on ResNet18]{\includegraphics[scale=0.121]{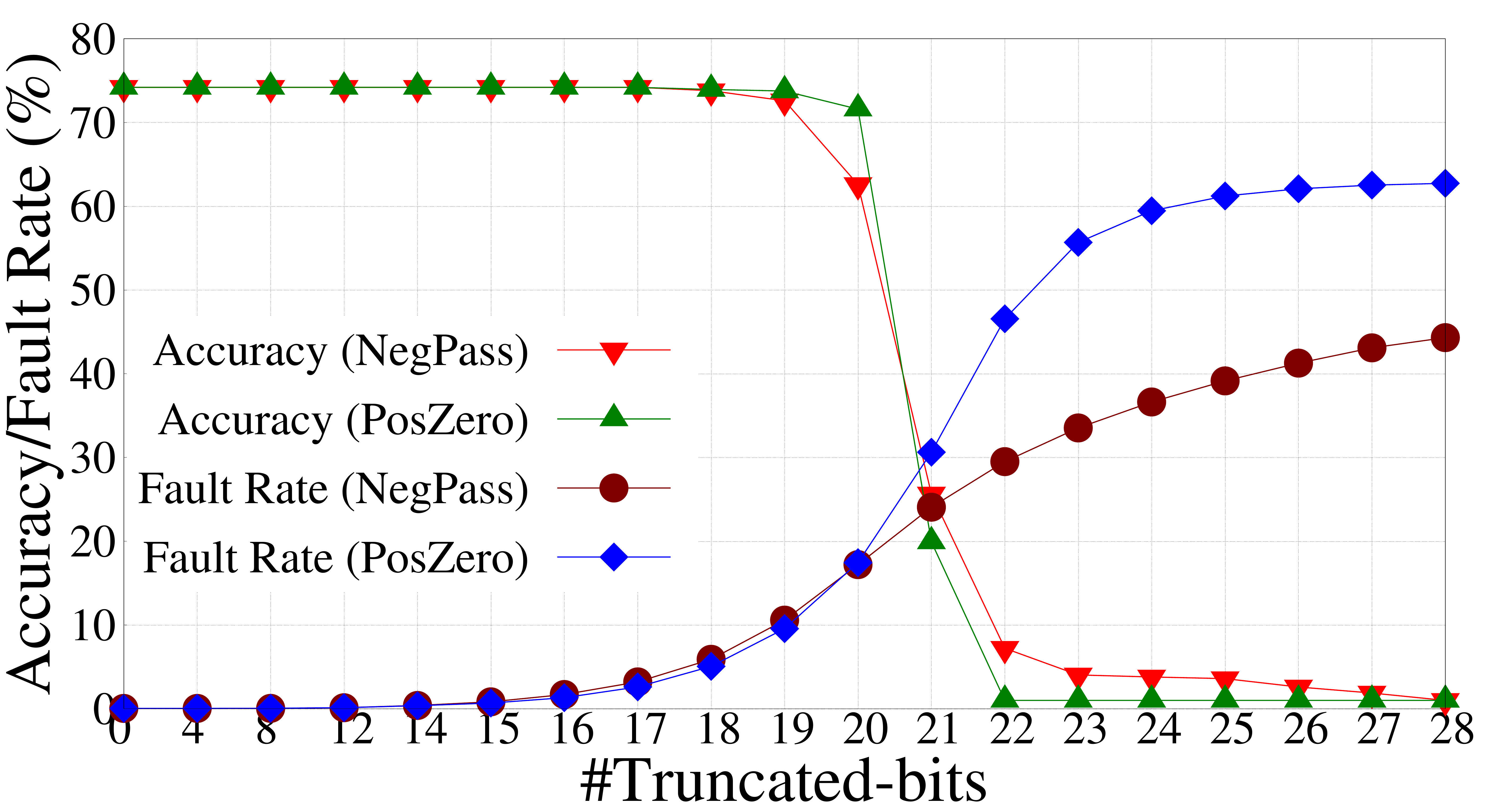}}
\subfloat[TinyImageNet on ResNet18]{\includegraphics[scale=0.121]{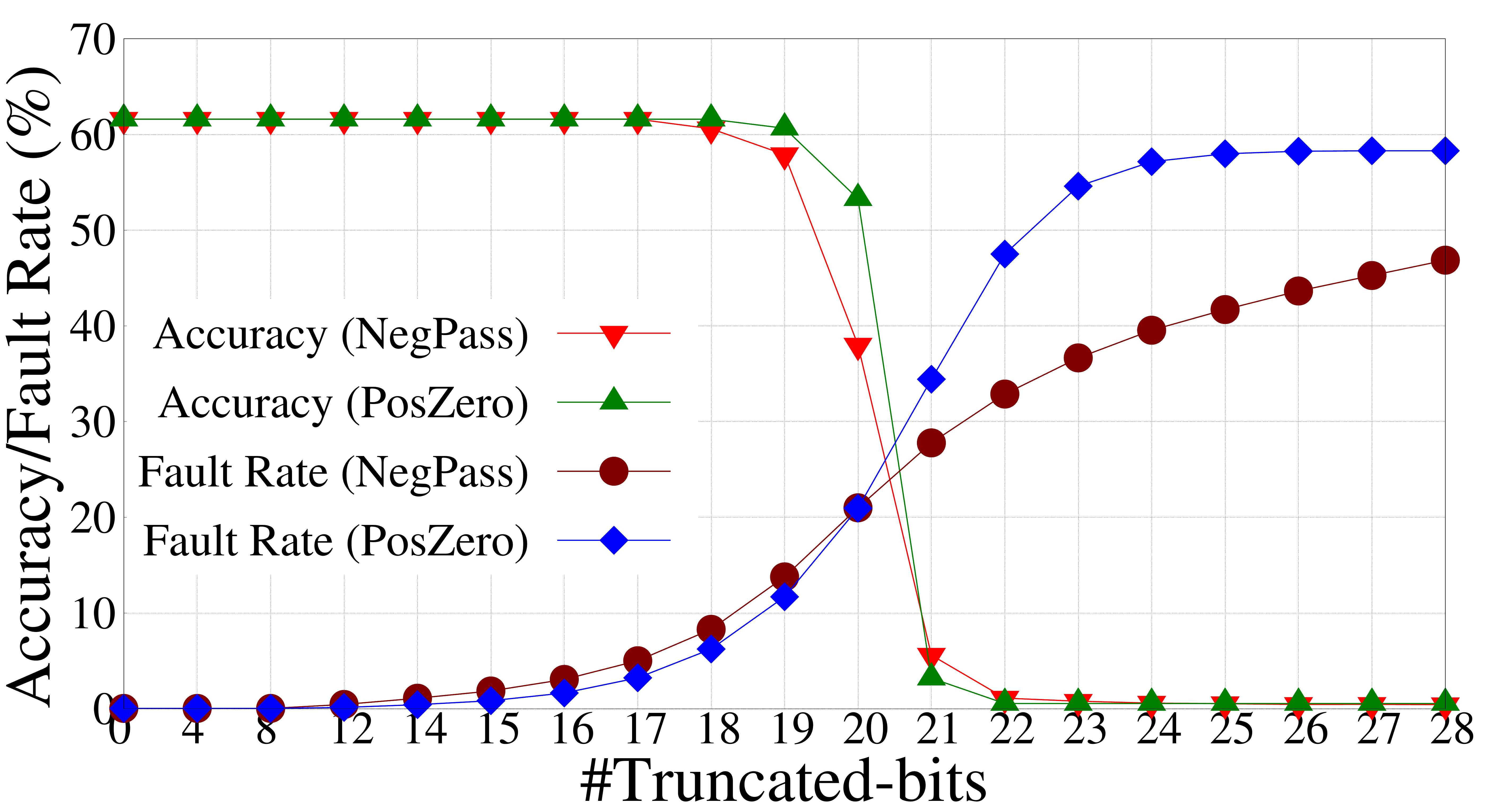}} 
\vspace{-1em}\\
\subfloat[CIFAR-100 on DeepReDuce Model]{\includegraphics[scale=0.121]{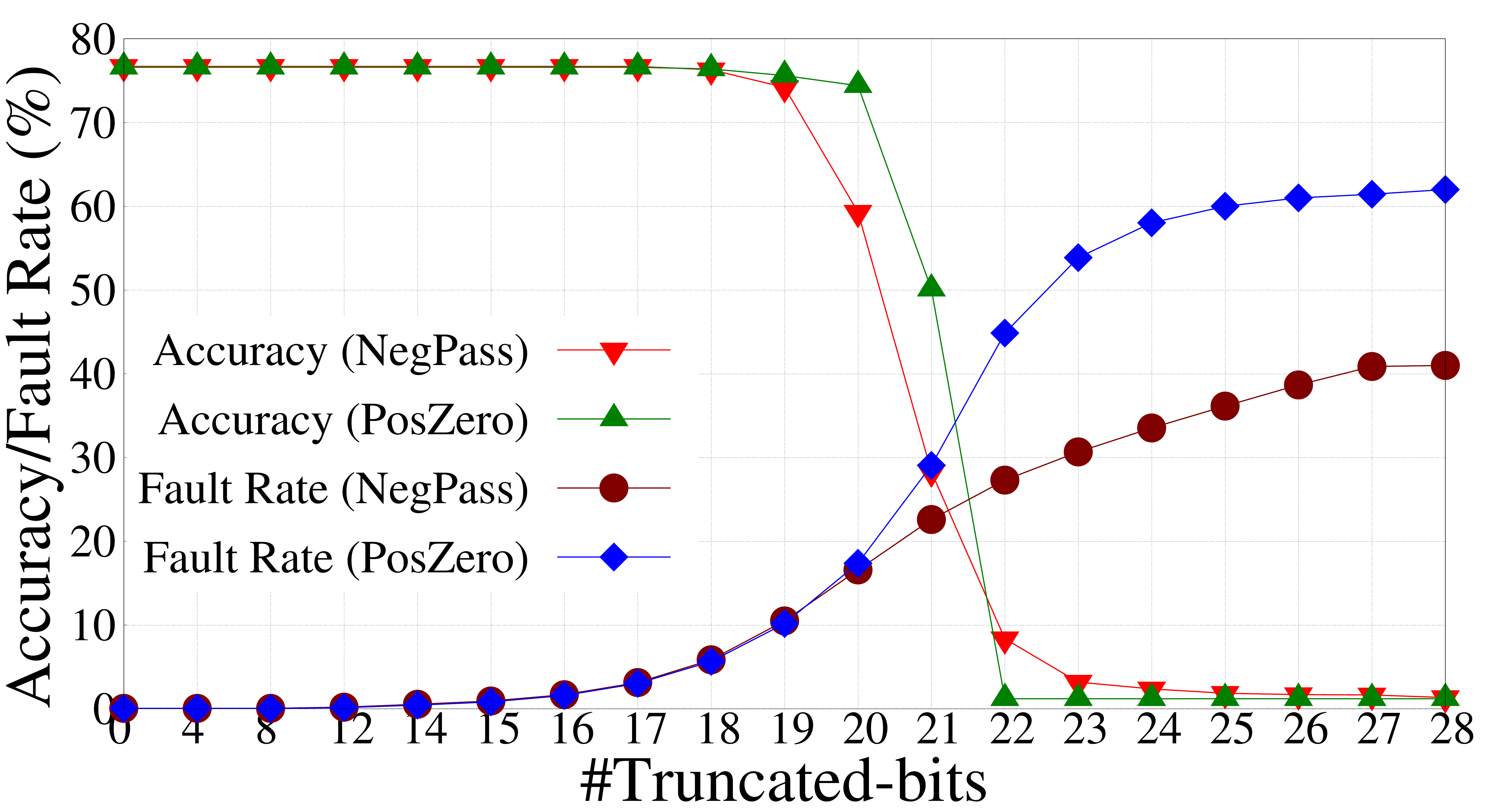}}
\subfloat[TinyImageNet on DeepReDuce Model]{\includegraphics[scale=0.121]{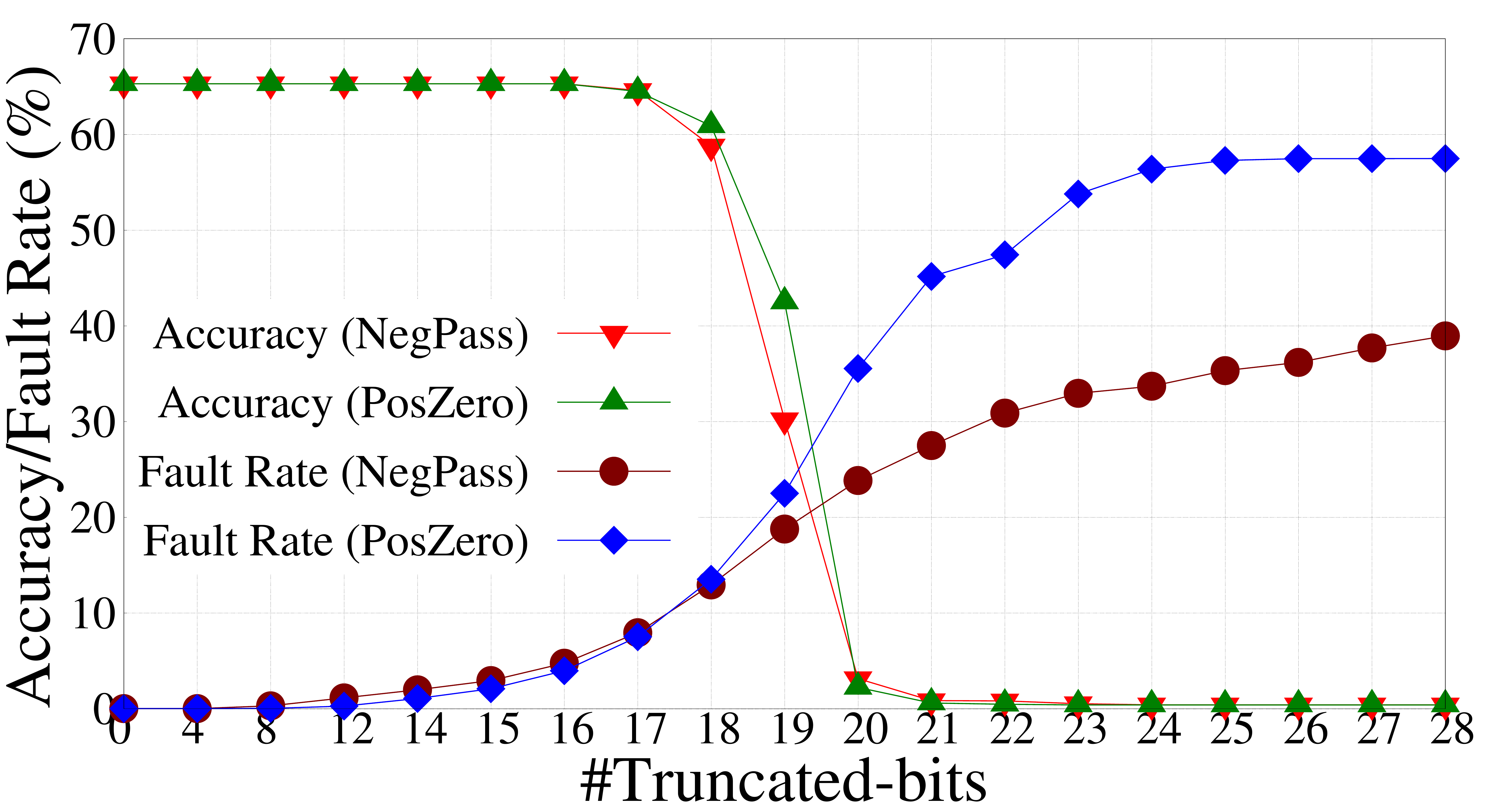}}
\vspace{-.5em}
\caption{Accuracy and fault rate variation with the increasing number of truncated-bits. Experiments performed on CIFAR-100 and TinyImageNet with ResNet18 vanilla model (top row) and with DeepReDuce models \cite{jha2021deepreduce} (bottom row).}
\vspace{-1em}
\label{fig:CiraAccWithFaultRate}
\end{figure}


\subsection{Experimental Results}
\textbf{Validating the Stochastic ReLU Fault Model.}
We begin by validating our model of stochastic ReLUs described in Section~\ref{sec:stochrelu} (Theorem~\ref{th:stochrelu} and \ref{th:truncstochrelu}) against \sys's stochastic ReLU implementation. 
Figure~\ref{fig:FaultModelValidation}(a) plots the fault probability of stochastic ReLU with 18-bit truncation in the PosZero mode against the histogram of ResNet18's activations after the first convolution layer. According to our fault model, small positive activations in truncation range ($0\leq x<2^{18}$) incur a high fault probability, and we have $P=({2^{18}-x})/{2^{18}}$. For values outside of this range, the fault probability is small and grows proportional to activation absolute value, and we have $P={|x|}/{p}$.
Figure~\ref{fig:FaultModelValidation}(b) plots fault rates on ResNet18 trained on C100 for PosZero sotchastic ReLU mode. 
We plot the total fault rate for all activations and the fault rate for only positive activations. 
Points in the plot indicate measurements from the implementation and 
the lines show our estimates using the model.
We observe that our model is consistent with the implementation across a wide range of truncation values.
As expected, as we increase the amount of truncation, the fault rates increase. With 28 bits of truncation, all positive activations are faulty. The total fault rate is $60\%$,
which is lower than the positive fault rate because negative activations incur relatively few faults, as predicted by our model. 

\textbf{Fault Rates and Test Error vs. Truncation.}
Figure~\ref{fig:CiraAccWithFaultRate} shows the relationship between truncation, fault rates, and test accuracy.
The experiments are done using the C100 and Tiny datsets 
with ResNet18 and DeepReDuce-optimized networks. 
Each plot shows data for both NegPass and PosZero modes. 
We observe that in all cases, \sys is able to truncate 17-19 bits with negligible accuracy loss at fault rates up to $10\%$.
We also find that the PosZero version of \sys is consistently slightly better than NegPass for these models, enabling between 1-2 extra bits of truncation. 
For all subsequent experiments, we pick the fault mode and truncation such that the resulting accuracy is within $1\%$ of baseline.


\begin{wrapfigure}{r}{0.40\textwidth}
\centering
    \includegraphics[width=0.4\textwidth]{./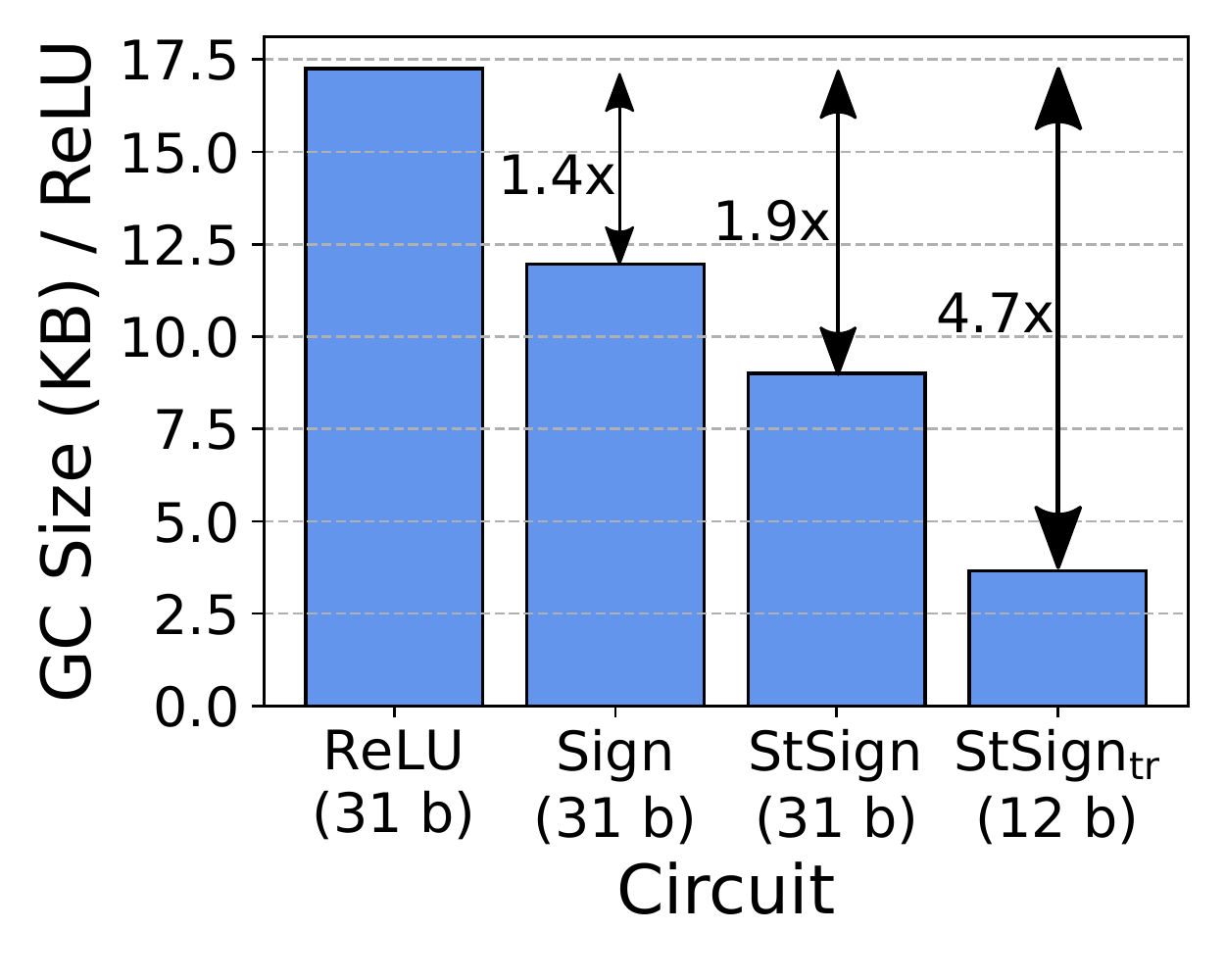}
    \vspace{-2em}
    \caption {Garbled circuit size comparison between baseline ReLU, naive sign and \sys stochastic ReLUs. 
    }
    \vspace{-1.5em}
\label{fig:gc-size}
\end{wrapfigure}

\textbf{\sys Accuracy and PI Runtime on Baseline Models.}
Table~\ref{tab:CircaC10C100Tiny} shows the accuracy and runtime of \sys for the C10/100 and Tiny datasets applied on top of standard ResNet18/32 and VGG16 models. \sys achieves $2.6\times$ to $3.1\times$ PI runtime improvement, in each instance with less than $1\%$ accuracy reduction. The runtime improvements are larger for Tiny because the baseline networks are different owing to Tiny's higher spatial resolution. 

\textbf{\sys vs. State of the Art.}
Delphi, the baseline protocol on which we build \sys, reduces PI runtime by replacing 
selected ReLUs with cheaper quadratic activations. For C100, Delphi reduces the number of ReLUs in ResNet32 by $1.2\times$ with $~1\%$ accuracy loss compared to baseline which, at best, translates to an equal reduction in PI latency. For the same setting, \sys achieves a $2.6\times$ reduction in PI latency.

\sys can be applied on \emph{any} pre-trained ReLU network. Table~\ref{tab:CircaDeepReDuce} shows \sys's accuracy and PI runtime applied to DeepReDuce-optimized networks, the current state of the art in PI, across a range of ReLU counts on C100 and Tiny. \sys reduces DeepReDuce PI latency by $1.6\times$ to $1.8\times$, with less than $1\%$ accuracy loss. 
Moreover, \sys improves the set of Pareto optimal points.
For example, \sys achieves $75.34\%$ accuracy on C100 with 1.65s PI runtime, while DeepReduce has both higher runtime (1.71s) and lower accuracy ($74.7\%$). 
For Tiny, \sys increases DeepReduce's accuracy from $59.18\%$ to $61.63\%$ at effectively iso-runtime (3.18s vs. 3.21s).  

Finally, SAFENet~\cite{SAFENET}, the state of the art that DeepReDuce improved on, achieves $1.9\times$ speedup over Delphi on ResNet32/C100, while \sys achieves a $2.6\times$ speedup.  

\begin{table} [t]
\caption{Circa on CIFAR-10 (C10), CIFAR-100 (C100), and TinyImageNet (Tiny).}
\label{tab:CircaC10C100Tiny}
\centering 
\resizebox{\textwidth}{!}{
\begin{tabular}{lccccccc} \toprule
\multirow{2}{*}{ Network-Dataset} & \multirow{2}{*}{ \#ReLUs (K) } & \multirow{2}{*}{\begin{tabular}[c]{@{}c@{}}Baseline \\ Acc\end{tabular}} & \multicolumn{2}{c}{Stochastic ReLU} & \multirow{2}{*}{\begin{tabular}[c]{@{}c@{}}Baseline \\ Runtime (s) \end{tabular}}& \multirow{2}{*}{\begin{tabular}[c]{@{}c@{}}\sys \\ Runtime (s)\end{tabular}}& \multirow{2}{*}{\begin{tabular}[c]{@{}c@{}}Runtime \\ Speedup\end{tabular}}
\\
\cmidrule(lr{0.5em}){4-5} \vspace{-0.3cm} \\
& & & NegPass (bits) & PosZero (bits) & & &\\ \midrule
ResNet32-C10 & 303.1 & 92.43\% & 91.47\% (12) & 91.85\% (12) & 6.32 & 2.47 & 2.6$\times$\\
ResNet18-C10 & 557.1 & 94.66\% &93.77\% (11) & 94.24\% (11) & 11.05 & 3.89 & 2.8$\times$\\
VGG16-C10 & 284.7 & 94.00\% & 93.77\% (12) & 93.61\% (13) &  5.89 & 2.25 & 2.6$\times$\\ \midrule
ResNet32-C100 & 303.1 & 67.32\% & 66.41\% (14) & 66.32\% (13) & 6.32 & 2.47 & 2.6$\times$\\
ResNet18-C100 & 557.1 & 74.24\% & 73.80\% (13) & 73.76\% (12) & 11.05 & 4.15 & 2.7$\times$\\
VGG16-C100 & 284.7 & 73.94\% & 73.25\% (12) & 73.19\% (12) &  5.89 & 2.25 & 2.6$\times$\\ \midrule
ResNet32-Tiny & 1212.4 & 55.53\% & 55.15\% (16) & 54.56\% (15) & 24.24 & 9.04 & 2.7$\times$\\
ResNet18-Tiny & 2228.2 & 61.60\% & 60.60\% (13) & 60.65\% (12) & 44.55 & 14.28 & 3.1$\times$\\
VGG16-Tiny & 1114.1 & 50.85\% & 50.73\% (12) & 50.30\% (12) & 21.41 & 6.96 & 3.1$\times$\\  \bottomrule
\end{tabular} }
\vspace{-1em}
\end{table}

\begin{table} [t]
\caption{Circa with DeepReDuce (ResNet18) models on CIFAR-100 and TinyImageNet.}
\label{tab:CircaDeepReDuce}
\centering 
\resizebox{\textwidth}{!}{
\begin{tabular}{lccccccc} \toprule
\multirow{2}{*}{ Network-Dataset } & \multirow{2}{*}{ \#ReLUs (K)} & \multirow{2}{*}{\begin{tabular}[c]{@{}c@{}}Baseline \\ Acc\end{tabular}} & \multicolumn{2}{c}{Stochastic ReLU} & \multirow{2}{*}{ \begin{tabular}[c]{@{}c@{}}Baseline \\ Runtime (s)\end{tabular}}& \multirow{2}{*}{\begin{tabular}[c]{@{}c@{}}\sys \\ Runtime (s)\end{tabular}}& \multirow{2}{*}{\begin{tabular}[c]{@{}c@{}}Runtime \\ Speedup\end{tabular}}
\\
\cmidrule(lr{0.5em}){4-5} \vspace{-0.3cm} \\
 & & & NegPass (bits) & PosZero (bits) & & &\\ \midrule
DeepReD1-C100 & 229.4 & 76.22\% & 76.34\% (13) & 75.62\% (12) & 3.18 & 1.84 & 1.7$\times$\\
DeepReD2-C100 & 114.7 & 74.72\% & 73.47\% (13) & 73.64\% (13) & 1.71 & 1.05 & 1.6$\times$\\
DeepReD3-C100 & 196.6 & 75.51\% & 75.13\% (13) & 75.34\% (13) & 2.76 & 1.65 & 1.7$\times$\\
DeepReD4-C100 & 98.3 & 71.95\% & 71.45\% (13) & 71.65\% (13) & 1.48 & 0.903 & 1.6$\times$\\ \midrule
DeepReD1-Tiny& 917.5 & 64.66\% & 64.62\% (14) & 64.53\% (14) & 12.27 & 6.68 & 1.8$\times$\\
DeepReD2-Tiny& 458.8 & 62.26\% & 61.28\% (15) & 61.26\% (15) & 6.50 & 3.94 & 1.6$\times$\\
DeepReD5-Tiny& 393.2 & 61.65\% & 61.63\% (15) & 61.66\% (15) & 5.38 & 3.21 & 1.7$\times$\\
DeepReD6-Tiny  & 229.4 & 59.18\% & 58.65\% (15) & 58.61\% (15) & 3.18 & 2.01 & 1.6$\times$\\   \bottomrule
\end{tabular} }
\vspace{-1em}
\end{table}

\textbf{Effectiveness of \sys Optimizations.}
\sys encompasses three optimizations that build 
on top of each other, buying us multiplicative savings in GC size and PI runtime. 
Figure~\ref{fig:gc-size} shows the GC size after each optimization.  
Replacing the baseline 31-bit ReLU GC with a 31-bit sign GC reduces GC size by $1.4\times$, 
(with no accuracy loss), a 31-bit stochastic sign GC is $1.9\times$ smaller, and truncating the stochastic sign to 12-bits achieves $4.7\times$ saving over the baseline.  The runtime improvements from each of these optimizations are shown in Table 3 in the Appendix.

\section{Related Work}\label{sec:related}
\textbf{PI Protocols.} Over the past few years a series of papers have proposed and optimized protocols for private machine learning.
CryptoNets~\cite{gilad2016cryptonets} demonstrates an HE only protocol for inference using the MNIST dataset.
SecureML~\cite{mohassel2017secureml} shows how secret sharing could be used for MNIST inference and trains linear regression models~\cite{mohassel2017secureml}.
MiniONN~\cite{liu2017oblivious} combines secret sharing with multiplication triples and GCs, allowing them to run deeper networks,
and forms the foundation for a series of follow-on protocols.
While MiniONN generates multiplication triples for each multiplication in a linear layer, Gazelle~\cite{juvekar2018gazelle} uses an efficent additive HE protocol to speed up linear layers.
Delphi shows how significant speedup can be obtained over {Gazelle} by
moving heavy cryptographic computations offline.
XONN~\cite{riazi2019xonn} enables private inference using only GCs for binarized neural networks and leverages the fact that XORs can be computed for free in the GC protocol to achieve speedups.
Another approach is to replace GCs with secure enclaves and process linear layers on GPUs for more performance~\cite{slalom}.
Some have also focused on privacy enhanced training~\cite{falcon, cryptGPU},
typically assuming a different threat model than this work.

\textbf{ReLU optimization.} Prior work has also looked at designing ReLU optimized networks.
A common approach is to replace ReLUs with quadratics~\cite{gilad2016cryptonets, mishra2020delphi, SAFENET, fastercryptonets}. 
While effective in reducing GC cost, this complicates the training process and can degrade accuracy.
Another approach is to design novel ReLU-optimized architectures.
CryptoNAS~\cite{cryptonas} develops the idea of a ReLU budget and designs new architectures to maximize network accuracy per ReLU.
DeepReDuce is recent work that proposes simply removing ReLUs from the network altogether~\cite{jha2021deepreduce}.
DeepReDuce is the current state-of-the-art solution for PI,
and we demonstrated how \sys can be used on top of it for even more savings.

\textbf{Fault tolerant inference.} Many have previously shown neural inference is resilient to faults even during inference.
In the systems community, fault tolerant properties are often used to improve energy-efficiency and runtimes~\cite{minerva, ares, jeff, maxnvm}.
Others have shown that networks can tolerate approximation to reduce model size
by pruning insignificant weights and activations and possibly compressing them~\cite{deepCompression, weightless, dropconnect, masr}.

\section{Conclusion}\label{sec:conclusion}
This paper presented \sys, a new method to significantly speed up PI
by lowering the high cost of ReLUs via approximation.
Our overarching objective was to minimize the amount of logic that must occur inside the expensive GC.
To achieve this goal we reformulated ReLU as an explicit sign test and mask, where only the sign test is evaluated with GCs and showed that we can truncate, or simply remove, many of the least significant bits to the sign test for even more savings.
Though the sign test and truncation optimizations introduce error, we rigorously evaluated the effects and found a negligible impact on accuracy.
Compared to a baseline protocol (Delphi) for PI, \sys provided over 3$\times$ (2.2$\times$ for quadratic Delphi) speedup.
Furthermore, we showed how existing state-of-the-art PI optimizations can be combined with \sys for even more savings,
resulting in an additional speedup of 1.8$\times$ over a baseline protocol.
{In absolute terms, \sys can run TinyImageNet inferences within 2 seconds, bringing PI another step closer to real-time deployment.}

\textbf{Limitations and Societal Impact}
\sys applies only for inference and not for training, and additionally only applies to certain types of deep networks. Private inference seeks to protect individuals from having to reveal sensitive data, but might also allow unregulated misuse of deep learning services.

\bibliographystyle{unsrt}
\bibliography{mybib}

\newpage

\appendix

\section{Appendix}
Table~\ref{tab:signpi} shows runtimes for each of the three \sys optimizations compared to ReLU baseline runtime.

\begin{table}[h]
\caption{PI Runtime in seconds for baseline ReLU, naive sign and \sys stochastic ReLUs}
\label{tab:signpi}
\centering 
\begin{tabular}{lccccc}\toprule
Network-Dataset & \#ReLUs (K) & ReLU & Sign & $\widetilde{\textrm{Sign}}$ & $\widetilde{\textrm{Sign}}_k$ \\ \midrule
Res32-C100 & 303.10      & 6.32          & 5.51          & 4.50           & 2.47        \\
Res18-C100 & 557.00      & 11.05         & 9.83          & 8.15           & 4.15  \\
VGG16-C100 & 284.67      & 5.89          & 5.01          & 4.59           & 2.25        \\\midrule
Res32-Tiny & 1212.42     & 24.24         & 19.45         & 16.00          & 9.04            \\
Res18-Tiny & 2228.24     & 44.55      &     35.74       &  29.40          & 14.28       \\
VGG16-Tiny & 1114.10     & 21.41         & 17.91         & 14.68          & 6.96     \\ \bottomrule  
\end{tabular}
\end{table}

\end{document}